\documentclass{article}

\usepackage{microtype}
\usepackage{graphicx}
\usepackage{subcaption}
\usepackage{booktabs} 

\usepackage{hyperref}

\usepackage[preprint]{icml2026}

\usepackage{amsmath}
\usepackage{amssymb}
\usepackage{mathtools}
\usepackage{amsthm}

\usepackage[capitalize,noabbrev]{cleveref}

\theoremstyle{plain}
\newtheorem{theorem}{Theorem}[section]

\newtheorem{lemma}[theorem]{Lemma}

\theoremstyle{definition}
\newtheorem{definition}[theorem]{Definition}

\theoremstyle{remark}
\newtheorem{remark}[theorem]{Remark}

\usepackage[textsize=tiny]{todonotes}

\icmltitlerunning{Contract And Conquer: How to Provably Compute Adversarial Examples for a Black-Box Model?}

\begin{document}

\twocolumn[
  \icmltitle{Contract And Conquer: How to Provably Compute Adversarial Examples for a Black-Box Model?}



  \icmlsetsymbol{equal}{*}

  \begin{icmlauthorlist}
    \icmlauthor{Anna Chistyakova}{equal,yyy}
    \icmlauthor{Mikhail Pautov}{equal,xxx,yyy}
  \end{icmlauthorlist}

  \icmlaffiliation{xxx}{AXXX}
  \icmlaffiliation{yyy}{Trusted AI Research Center, RAS}

\icmlcorrespondingauthor{Anna Chistyakova}{ann244111@gmail.com}

  \icmlkeywords{adversarial robustness, black-box attacks}

  \vskip 0.3in
]



\printAffiliationsAndNotice{\icmlEqualContribution}

\begin{abstract}
  Black-box adversarial attacks are widely used as tools to test the robustness of deep neural networks against malicious perturbations of input data aimed at a specific change in the output of the model. Such methods, although they remain empirically effective, usually do not guarantee that an adversarial example can be found for a particular model. In this paper, we propose Contract And Conquer (CAC), an approach to provably compute adversarial examples for neural networks in a black-box manner. The method is based on knowledge distillation of a black-box model on an expanding distillation dataset and precise contraction of the adversarial example search space. CAC is supported by the transferability guarantee: we prove that the method yields an adversarial example for the black-box model within a fixed number of algorithm iterations. Experimentally, we demonstrate that the proposed approach outperforms existing state-of-the-art black-box attack methods on ImageNet dataset for different target models, including vision transformers.  
\end{abstract}


\begin{figure}[ht]
  \begin{center}
    \centerline{\includegraphics[width=\columnwidth]{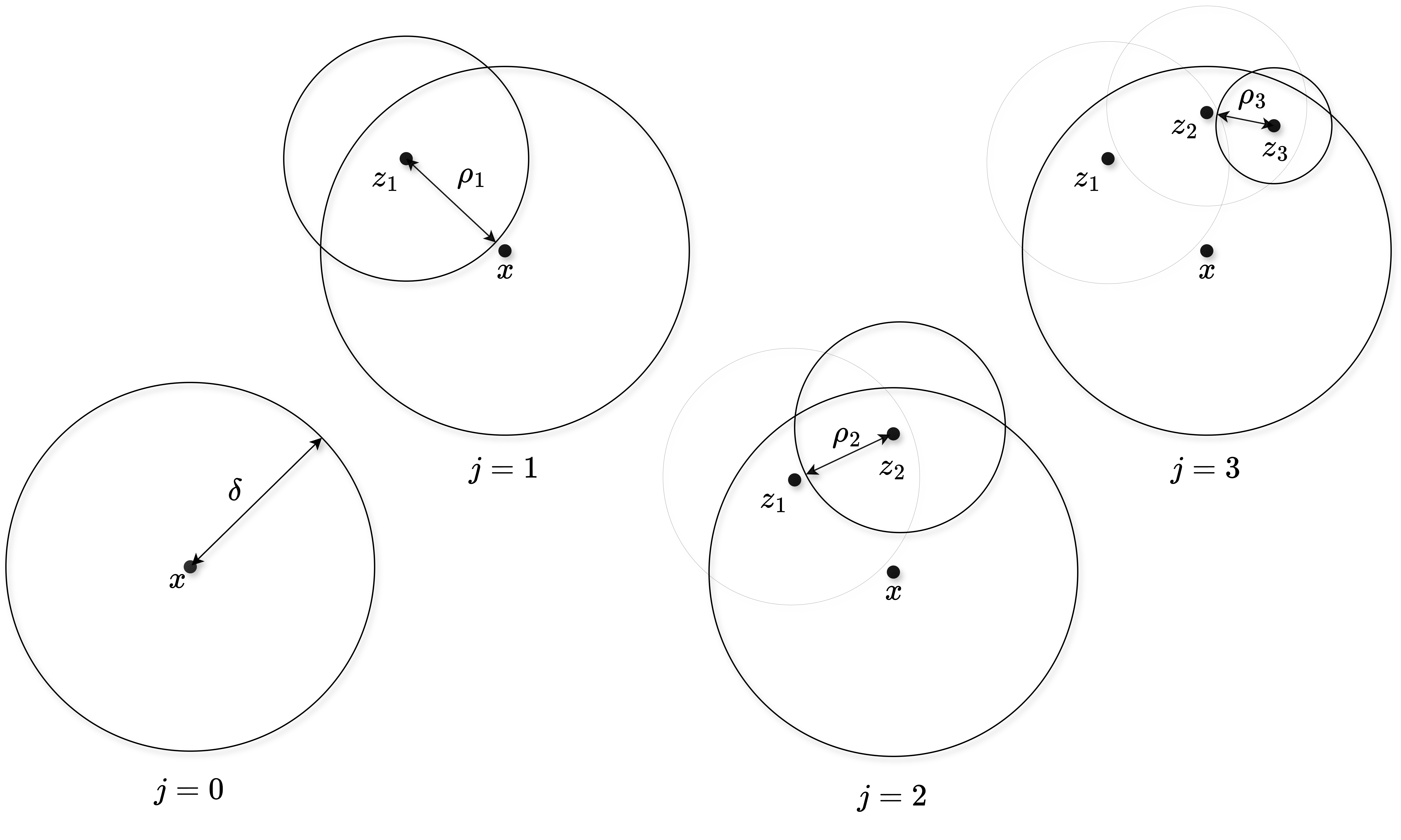}}
    \caption{
      Illustration of the contraction of the adversarial example search space. Given the number $j$ of algorithm iteration, the adversarial example search space on iteration $j$, namely, $U_\delta(x)_j,$ is the intersection of the $\rho_j-$vicinity of an adversarial example $z_j$ with the initial attack search space, $U_{\delta}(x).$ Formally, $U_\delta(x)_j = U_{\delta}(x) \cap U_{\rho_j}(z_j).$ The quantity $\rho_j$ is defined in Eq. \ref{eq:rho}. For each algorithm iteration, the adversarial example search space is represented by the intersection of bold circles. 
    }
    \label{img:contraction}
  \end{center}
\end{figure}

\section{Introduction}
Evaluating and enhancing the robustness of neural networks to malicious perturbations of input data, called adversarial attacks, is crucial in safety-critical applications, such as medicine or autonomous systems. It has long been known that a small, often imperceptible perturbation of image \cite{goodfellow2014explaining} or a minor paraphrase of an input prompt \cite{zhu2023promptrobust} can cause a desired change in the output of the corresponding model. It is noteworthy that the   effectiveness of adversarial attacks is experimentally confirmed in the black-box settings, when the attacker has limited access to the model, namely, when they can query the model and receive its output in a fixed format \cite{qi2023transaudio,maheshwary2021generating,guo2019simple}.

\begin{figure*}[ht]
  \begin{center}
    \centerline{\includegraphics[width=0.77\textwidth]{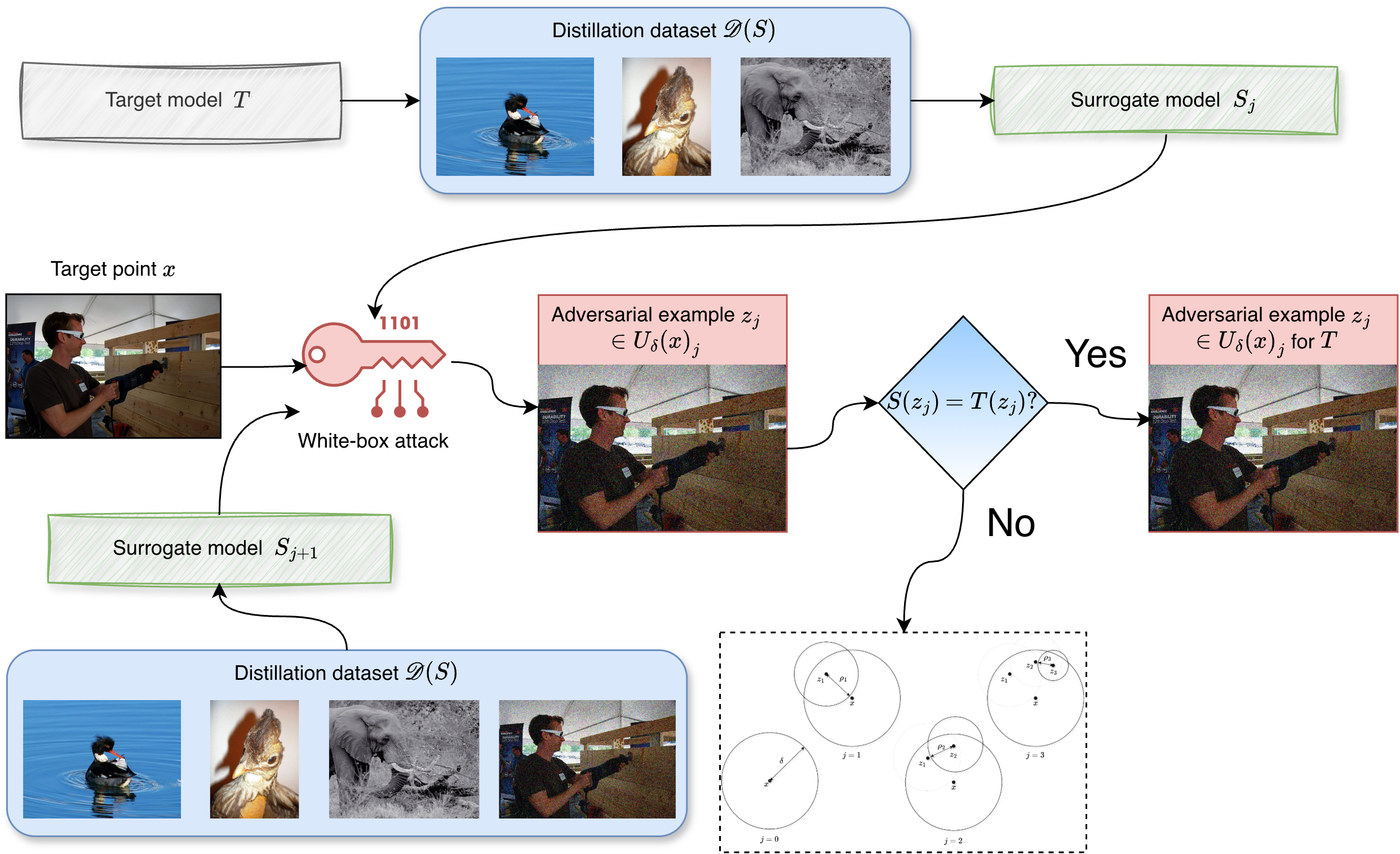}}
    \caption{
      Schematic representation of the proposed method. Given alternation iteration $j$ and the target model $T$, we prepare the distillation dataset $\mathcal{D}(S)$ and train the surrogate model $S_j$. Then, $S_j$ is attacked at the target point $x$ in the white-box setting, and an adversarial example $z_j$ is computed. If $z_j$ is transferable to $T,$ algorithm returns $z_j$ and stops; otherwise, the adversarial example search space is contracted as shown in Fig. \ref{img:contraction}, $(z_j, T(z_j))$ is added to the distillation dataset, and the next instance of the surrogate model, $S_{j+1},$ is obtained.}
    \label{img:method}
  \end{center}
\end{figure*}

Starting from the seminal work \cite{Szegedy2014intriguing}, the majority of research in the field of adversarial machine learning has focused on developing  methods to compute adversarial examples and empirical approaches to defend the models against them. Mainly, the methods of computing adversarial examples are based on utilizing the information about the target model's outputs and gradients  \cite{carlini2017towards,madry2018towards,andriushchenko2020square,park2024hard} or its estimation \cite{guo2019simple,chen2017zoo,cheng2024efficient,han2024bfs2adv}. In parallel, empirical defense methods are mainly based on adversarial training \cite{madry2018towards,bai2021adversarial}, where the model is trained on generated adversarial examples, gradient regularization \cite{ross2018improving}, calibration \cite{stutz2020confidence}, or weight perturbation \cite{wu2020adversarial,xu2022weight}. It is worth mentioning that the existence of an arms race between empirical defenses and adversarial attacks is concerning for security-critical applications: specifically, it can not be guaranteed that recently developed empirical defense mechanisms will remain effective against novel attack methods, and vice versa. Thus, the effectiveness of the application of empirical methods from adversarial machine learning to evaluate the robustness in safety-critical settings is questionable.

More than that, a variety of regulatory acts for artificial intelligence systems are in the process of development today, for example, the EU AI Act or the US National AI Initiative Act. These frameworks, among other things, are designed to develop standards of robustness of machine learning algorithms and services to adversarial attacks. As a consequence, to deploy a machine learning system in a specific setting, one will have to verify that it complies with the aforementioned standards. 

To ground the evaluation of the resilience of machine learning methods to adversarial attacks, it may be  reasonable to focus on certified robustness methods. Instead of relying on heuristics used in empirical defense approaches, certified robustness methods aim to provide mathematical guarantees about a model's behavior  when its input is subjected to a certain perturbation. The methods of certified robustness are usually based on randomized smoothing \cite{cohen2019certified,pautov2022smoothed,voracek2024treatment}, set propagation techniques \cite{gowal2018effectiveness,mao2024understanding},  convex relaxation \cite{anderson2025towards,kim2024convex}, or probabilistic certification \cite{weng2019proven,pautov2022cc,feng2025prosac}. These approaches can yield sample-level or population-level guarantees that no adversarial example exists, given the type of perturbation and the perturbation budget. Unfortunately, certified robustness comes at a cost of computationally expensive inference \cite{cohen2019certified}, may require significant changes to both training and inference, limit available model architectures \cite{cullen2025position}, or may lead to a notable performance degradation of the certifiably robust model.  Aforementioned drawbacks are among the ones that limit the embedding of certified robustness into the state-of-the-art machine learning services: for example, an integration of randomized smoothing defense into medical diagnostics or into digital services that mark harmful content may lead to a significant degradation of performance on benign input data or severely slow down the system. 
As a consequence, to both retain practical effectiveness and to align with the upcoming AI regulatory acts, the developers will probably seek alternatives to  certified robustness.   

At the same time, a complementary research question arises: how to \emph{guarantee} that the given black-box machine learning is not robust? Specifically,  a method to prove that the given model is not robust might be an important tool for the assessment of robustness, especially from the perspective of compliance with the prospective standards. In this paper, we focus on this research question and propose \emph{Contract And Conquer} (CAC), an iterative method to compute adversarial examples for black-box models with convergence guarantees. By design, CAC is an alternation of two processes: (i) knowledge distillation \cite{hinton2015distilling} of the  target black-box target model by a small  surrogate  model and (ii) a white-box adversarial attack on a surrogate model within a vicinity of the target input point. Intuition behind CAC is simple: knowledge distillation forces the surrogate model to replicate the predictions of the target  model in the closed vicinity of target point, where a white-box attack on the surrogate  model is used to craft adversarial examples; careful alternation of these operations, together with small contraction of the vicinity of the target point, yields an upper bound on the number of alternations needed to compute an adversarial example for the black-box target model. 

Our contributions are summarized as follows:
\begin{itemize}
    \item A novel iterative transfer-based adversarial attack,  Contract and Conquer (CAC), is proposed. The method is based on knowledge distillation of the target model on an expanding  dataset and the white-box attack on the surrogate model within a contracting adversarial example search space. 
    \item We theoretically demonstrate that, under mild assumptions on the surrogate model, the proposed transfer-based attack is guaranteed to yield an adversarial example for the black-box target  model within a fixed number of algorithm iterations.
    \item We experimentally show that CAC outperforms the state-of-the-art black-box attack methods on popular image benchmarks for different target models, including vision transformers. 
\end{itemize}

\section{Related Work}

\subsection{Adversarial Attacks}
Soon after the vulnerability of neural networks to adversarial perturbations was established \cite{goodfellow2014explaining,Szegedy2014intriguing}, a lot of attack methods have been proposed \cite{moosavi2016deepfool,carlini2017towards,chen2020hopskipjumpattack}. One way to categorize attack methods is based on the degree of accessibility of the target model to an adversary. White-box attacks, that imply full access to the target model, including its internal weights, gradients and/or training data, are broadly gradient-based ones \cite{goodfellow2014explaining,carlini2017towards,madry2018towards}, or surrogate loss-based ones \cite{zhang2022enhancing,zhang2022improving,wang2023diversifying}. Gradient-based attacks exploit information about the target model's gradients, and, hence, tend to be of superior effectiveness; at the same time, the transferability, or the ability of adversarial examples to generalize across models, of  gradient-based adversarial attacks is usually modest, since the examples are specifically computed  for the specific model instance \cite{madry2018towards,qin2022boosting}.  To enhance the transferability of adversarial examples, some methods redesign objective functions, utilizing, among other things, an information from the hidden layers of the target model, for example, by improving the similarity between the features of the adversarial example and its benign preimage \cite{huang2019enhancing}, enhancing an invariance of an adversarial noise w.r.t. input  objects \cite{liu2025boosting}  or by disrupting a subset of important object-aware features of the target model \cite{wang2021feature}. In contrast, black-box attacks assume that an adversary only has query access to the target model, and, hence, can be used to evaluate the robustness of  machine learning services in real-world setups \cite{papernot2017practical,zhang2021reverse,ma2025jailbreaking}. These methods can be coarsely divided into score-based \cite{uesato2018adversarial,andriushchenko2020square,bai2020improving}, decision-based \cite{rahmati2020geoda,maho2021surfree,wang2022triangle} and transfer-based \cite{liu2017delving,xie2019improving,nasee2022rimproving,li2023making,chen2024rethinking} categories. When decision-based and score-based methods utilize the outputs of the target model to conduct an attack, transfer-based ones rely on training the surrogate models to further conduct a white-box adversarial attack against. These approaches, in particular, tend to show better transferability of adversarial examples from one model to another, mainly by design of optimization procedure and due to different heuristics used  \cite{debicha2023tad,xie2025chain}. 

We want to highlight that although transfer-based black-box adversarial attacks demonstrate remarkable transferability of adversarial examples from the surrogate models to the target models, they do not provide any guarantees of the success of an attack; in general, this important disadvantage is shared by known black-box attack methods.  


\subsection{Adversarial Defenses}
To level out the threat of adversarial attacks, plenty of defense methods have been proposed. They can be divided into two categories, namely, empirical ones and certified ones. When empirical methods mainly rely on data-driven and architecture-level heuristics, the certified ones are equipped with formal guarantees: for example, they allow to formally prove that no adversarial example exists in a particular vicinity of the target point \cite{gowal2018effectiveness,cohen2019certified}. Among  empirical methods, adversarial training \cite{goodfellow2014explaining,madry2018towards} and its modifications \cite{shafahi2019adversarial,wong2020fast} stand out. These approaches enhance the robustness of neural networks by jointly training them on benign samples and adversarial examples generated by certain attack methods, exposing the population of adversarial examples that the model has to be defended from; it is noteworthy that adversarial training methods offer the strongest empirical robustness. The other methods pre-process the data before feeding it to the target network \cite{guo2018countering,nesti2021detecting}, adopt image purification techniques \cite{nie2022diffusion,wei2025real}, use auxiliary methods to detect and correct adversarial perturbations \cite{liu2019detection,aldahdooh2022adversarial,che2025large}, or modify the defended model \cite{yu2021defending,allouaha2025daptive,zhao2025universal}. Among the certified methods, randomized smoothing \cite{cohen2019certified,lecuyer2019certified} and its variants  \cite{yang2020randomized,bansal2022certified,korzh2025certification} are used to provide the state-of-the-art worst-case guarantees on robustness of neural networks in different setups. Instead of providing the output for a single input sample, these methods aggregate the predictions over a large amount of perturbed input samples. The other certified defense methods include, but are not limited to, set propagation techniques \cite{gowal2018effectiveness,wang2021beta,mao2024understanding}, and formal verification methods \cite{tjeng2019evaluating,shi2020robustness}. 

It is worth mentioning that application of provable, effective, but computationally expensive defense methods in real-world AI systems is rather selective and incremental, than rigorous  and complete, since in many setups,  speed, performance and utility may outweigh robustness.

\section{Methodology}

In this section, we provide background and motivation followed by the description of the proposed method. Later, we introduce theoretical justifications of the method.  

\subsection{Background and Motivation}
In this work, we separately consider hard-label and soft-label settings. Specifically, let $T$ be the target black-box model that takes real-valued image tensor $x \in [0,1]^d$ as input, and returns, in hard-label setting, class index $y \in [1,\dots,K]$, where $K$ is the number of classes; in soft-label setting, it returns the vector of class probabilities $p \in [1,\dots,K].$ Here and below, we represent the prediction label assigned by the black-box model $T$ for input $x$ in the form 
\begin{align}
    & T(x) = y, \text{ for hard-label case,} \nonumber \\
    & T(x) = \arg\max\limits_{i \in [1,\dots,K]}T(x)_i,  \text{ for soft-label}.
\end{align}
Let $S:[0,1]^d \to [0,1]^K$ be the white-box model that maps an input tensor to a class index as $y = \arg\max_{i \in [1,\dots,K]}S(x)_i.$
In this work, we focus on the simplest formalism of an adversarial attack given in the definitions below. 
\begin{definition}
    Let $x$ be the sample correctly classified by the model $T$, $y = T(x)$, and let $\delta>0$ be the fixed constant.  Then, the object $x': \|x-x'\|_\infty \le \delta$ is called an adversarial example for $T$ at point $x,$ if $T(x') \ne T(x).$ If $T(x')=y'$ for some predefined class $y'\ne y,$ then $x'$ is called targeted adversarial example. 
\end{definition}
Starting from here, we refer to $U_\delta(x) = \{x':\|x-x'\|_\infty \le\delta \}$ as the initial adversarial example search space. Following the well-established notion \cite{madry2018towards}, we treat the $l_\infty$ constraint as the measure of invisibility of adversarial examples.  
\begin{definition}
    Let $x'\in U_\delta(x)$ be the adversarial example computed for the white-box model $S$ at point $x$, and $T$ be the separate black-box model. Then, $x'$ is called transferable from $S$ to $T$ if 
    \begin{equation}
        \begin{cases}
        \arg\max_{i \in [1,\dots,K]}S(x)_i = T(x), \\ 
         \arg\max_{i \in [1,\dots,K]}S(x')_i = T(x').
        \end{cases}
    \end{equation}
\end{definition}
The goal of this work is to propose an approach to compute adversarial examples for the target model, $T$, that is supported by a mathematical guarantee of the success of an attack. To do so, we apply a transfer-based attack paradigm. In a nutshell, instead of computing an attack for the target model explicitly, we apply knowledge distillation to obtain a smaller surrogate model, $S$, to attack in the white-box setting; then we demonstrate experimentally and formally prove that, under mild assumptions on the surrogate model and controllable contraction of the adversarial example search space, we are \emph{guaranteed} to compute an adversarial example for $T$ within the fixed number of iterations. In the next section, we provide a detailed description of the proposed method.

\subsection{Description of CAC}
\subsubsection{Surrogate Model and White-box Attack}
\label{sec:single_alternation}
Suppose that the black-box model $T$, the target point $x$ of class $y$, and the initial adversarial attack search space  $U_\delta(x)$ are fixed. We firstly obtain the surrogate model, $S$, by applying knowledge distillation to $T.$ The  distillation dataset for the surrogate model, $\mathcal{D}(S),$ consists of pairs $(x_k, T(x_k)),$ where $\{x_k\}_{k=1}^{m-1}$ is a subset of a hold-out dataset. This subset is formed in the following way: firstly, a random subset  $\{x_k\}_{k=1}^{N_{init}}$  is sampled from a hold-out dataset; then, among $N_{init}$ points, we choose $m-1$ closest ones to the target point $x$. The target point $(x, T(x))$ is included in $\mathcal{D}(S).$ Consequently, knowledge distillation is performed by training $S$ on $\mathcal{D}(S)$ by minimizing an empirical risk 
\begin{equation}
    L(S, \mathcal{D}(S)) = \frac{1}{|\mathcal{D}(S)|} \sum_{(x_k, y_k) \in \mathcal{D}(S)} l(S, x_k, y_k),
\end{equation}
where $l(S,x_k,y_k)$ is the cross-entropy loss function. In the experiments, we use $N_{init}=10000$ and $m=300$. 

We assume that the surrogate model has enough learning capabilities to match the predictions of the target model on $\mathcal{D}(S),$ which is formalized in the following form:
\begin{equation}
\label{eq:student_gap}
    \begin{cases}
        T(x_k) = \arg\max_{i \in [1,\dots,K]}S(x_k)_i  = y_k,\\
        \frac{1}{2}\left[S(x_k)_{y_k} - \max_{i\ne y_k} S(x_k)_i\right] > \varepsilon
    \end{cases}
\end{equation}
for all $(x_k, y_k) \in \mathcal{D}(S).$ Here, the second inequality reflects the confidence of the surrogate model, and $\varepsilon>0$ is a constant.

When the surrogate model is trained, we attack it in a white-box manner. Specifically, we apply MI-FGSM \cite{dong2018boosting} to find an adversarial example for $S$ within initial adversarial attack search space $U_\delta(x) $: 
\begin{equation}
\label{eq:wb}
    \begin{cases}
        g_{t+1}  = \mu g_t + \frac{\nabla_{x'_t}l'(S, x'_t, y)}{\|\nabla_{x'_t}l'(S, x'_t, y)\|_1},\\
        
        x'_{t+1} = \text{Proj}_{U_{\delta}(x)}\left[x'_t + \alpha(\delta) \text{ sign} (g_{t+1})\right], \\
        x'_1 = x,\qquad g_1=0. \\
    \end{cases}
\end{equation}
Here, $\mu$ is the momentum parameter, $\alpha(\delta)>0$ is the gradient step, $\text{Proj}_{U_{\delta}(x)}$ is the  projection onto the attack search space, $M$ is the maximum number of gradient steps, and  $l'$ is a loss function specified later. We refer to the process of distillation followed by the search for an adversarial example for the surrogate model as a single alternation. 

\subsubsection{Adjustment of Attack Parameters}
\label{sec:adjustment}
Let $j$ be the number of current alternation. We assume that for some iteration number $t \in [1,\dots,M]$,  an adversarial example for the surrogate model, $z_j=x_t',$ is found.  Then, the target model $T$ is queried with $z_j$ to check if it is transferable from $S$ to $T.$ If so, an algorithm yields $z_j$ as an adversarial example for $T;$ otherwise, we adjust the adversarial attack procedure: firstly,  $(z_j, T(z_j))$ is included in the distillation dataset $\mathcal{D}(S)$; secondly, the adversarial example search space is contracted as follows: 
\begin{align}
\label{eq:contraction}
    &U_\delta(x)_j \gets U_{\delta}(x) \cap U_{\rho_j}(z_j),
\end{align}
where 
\begin{equation}
\label{eq:rho}
     \rho_j = t \|z_j-z_{j-1}\|_\infty
\end{equation}
is the contracted distance between two previous adversarial examples. 
Here, $U_\delta(x)_j$ is the adversarial example search space after $j-$th alternation iteration,  $t \in (0,1)$ is the contraction parameter, $U_{\rho_j}(z_j) = \{a: \|a-z_j\|_\infty \le \rho_j\}$ and $z_0 =x.$ After these two adjustments, an algorithm proceeds to the next alternation described in Section \ref{sec:single_alternation}, but with updated distillation dataset and adversarial example search space. The procedures from Sections \ref{sec:single_alternation} and \ref{sec:adjustment} are described in Algorithm \ref{alg:adv_whitebox}. The adversarial example search space contraction is schematically presented in Figure \ref{img:contraction}.

\begin{algorithm}[htb]
   \caption{Contract and Conquer}
   \label{alg:adv_whitebox}
\begin{algorithmic}[1]
    \REQUIRE Black-box target model $T$, target point $x$ of class $y$, distance threshold $\delta$,  momentum parameter $\mu$, maximum number of MI-FGSM iterations $M$, maximum number queries to the target model $N$, initial size of distillation dataset $m$, hold-out dataset data points $\{x_k\}_{k=1}^{m-1}$, contraction parameter $t$
    \ENSURE Surrogate model $S$, adversarial example $(z, T(z))$ for the target model $T$
    \STATE {$\mathcal{D}(S) \gets \{(x_k, T(x_k))\}_{i=1}^{m-1} \cup \{(x,y)\}$}
    \COMMENT{initialize distillation dataset}
    \STATE{$N \gets N - m$}
    \COMMENT{the remaining number of queries to the target model decreases since $m$ were spent to initialize distillation dataset}
    %
    \STATE{$z_0 \gets x, \quad U_\delta(x)_0 \gets U_\delta(x), \quad \alpha \gets \delta/M$}
    \STATE{$j \gets 1$}
    \WHILE{$N \ge 0$}
        \STATE {$\text{train S on distillation dataset } \mathcal{D}(S)$}
        \STATE {$(z_j, \arg\max_{i \in [1,\dots,K]}S(z_j)_i) \gets \text{MI-FGSM}(S,\alpha, \mu, U_\delta(x)_{j-1},M, (x,y))$ \label{alg_line:wb}}
        \COMMENT{compute an adversarial example for the surrogate model according to Eq. \ref{eq:wb}}
        \IF{$\arg\max_{i \in [1,\dots,K]}S(z_j)_i = h(T, z_j)$}
        \STATE{\textbf{return} $S, \ (z_j, h(T,z_j))$}
        \ELSE 
        \STATE{$\mathcal{D}(S) \gets \mathcal{D}(S) \cup \{(z_j, T(z_j))\}$}
        \STATE{$\rho_j \gets t\|z_j - z_{j-1}\|$}
        \STATE{$U_\delta(x)_j \gets U_{\delta}(x) \cap U_{\rho_j}(z_j) $}
        \COMMENT{contract adversarial example search space according to Eq. \ref{eq:contraction}}
        \STATE{$\alpha \gets \rho_j / M$}
        \COMMENT{update the gradient step}
        \ENDIF
        \STATE{$N \gets N - 1$}
        \COMMENT{the remaining number of queries decreases since $1$ query is spent to compute $T(z_j)$}
        \STATE{$j \gets j + 1$}
\ENDWHILE
\end{algorithmic}
\end{algorithm}
\begin{remark}
    CAC is, in fact, not tied to a specific white-box attack; the usage of MI-FGSM is motivated by its simplicity and efficiency. The procedure in Algorithm \ref{alg:adv_whitebox} is described for a single white-box adversarial example to ease the notation. In practice, the algorithm computes a batch of $n_{adv}=10$ adversarial examples for speed-up. To ensure the variety of these examples, each example is computed for the target point $z_0 = x+\varepsilon_k$ and search space $U_\delta(x).$ Here $\{\varepsilon_k\}_{k=1}^{n_{adv}} \sim \mathcal{U}[-\delta, \delta].$ Additionally, if an adversarial example for the target model is found and the maximum number of queries to the target model has not been exhausted, (i) the radius of 
    the initial adversarial example search space, $\delta$, decreases, and (ii) the algorithm restarts to possibly yield an adversarial example closer to the target point.

\end{remark}

\subsection{Convergence Guarantee}

In this Section, we introduce the theoretical justification of CAC and justify the assumptions made. The following lemma represents the convergence guarantee of the method.
\begin{lemma}
\label{th:lemma}
    Fix an input sample $x$ and initial adversarial attack search space, $U_\delta(x) = \{a: \|x-a\|_\infty \le \delta\}.$   
    Suppose that for every $j \in \mathbb{Z}_{+},$ the white-box attack in Algorithm \ref{alg:adv_whitebox} yields an adversarial example for the model $S$.  Let  $S$   be a differentiable function with the bounded gradients in $U_\delta(x)$ for every $j \in \mathbb{Z}_+$ and let 
        \begin{equation}
        \gamma = \sup_{j \in \mathbb{Z}_+}\ \sup_{x' \in U_\delta(x)} \|\nabla S(x')\|_{op,\infty},
    \end{equation}
    where $\|\cdot\|_{op, \infty}$ is the operator norm induced by $l_\infty$ norm of vectors. 
    Let the surrogate model $S$ be trained according to Eq. \ref{eq:student_gap}, meaning that if $y_k = \arg\max_{i \in [1,\dots,K]}S(x_k)_i,$ then 
    \begin{equation}
    \label{eq:half_difference}
       \frac{1}{2} \left[ S(x_k)_{y_k} - \max_{i \ne y_k} (S(x_k))_i\right] > \varepsilon
    \end{equation} for all $(x_k,y_k) \in \mathcal{D}(S)$. Then, Algorithm \ref{alg:adv_whitebox} yields an adversarial example for the model $S$ which is transferable to $T$ at most at $(n-1)-$th alternation iteration, where 
    \begin{equation}
        (n-1) \ln t \le \ln \varepsilon - \ln\delta -\ln\gamma. 
    \end{equation}
\end{lemma}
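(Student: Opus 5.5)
The argument is by contradiction. We assume that the first $n-1$ alternations all fail, meaning that for $j=1,\dots,n-1$ the surrogate adversarial example $z_j$ is not transferable, so $\arg\max_i S(z_j)_i \ne T(z_j)$. The plan is to show that the search spaces then shrink geometrically, and that this clashes with the margin condition \eqref{eq:half_difference} once the radius falls below $\varepsilon/\gamma$. The contradiction yields the stated bound on $n$.

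\textbf{Step 1: geometric contraction.} By construction $z_{j+1} \in U_\delta(x)_j \subseteq U_{\rho_j}(z_j)$. Hence
\begin{equation*}
\|z_{j+1}-z_j\|_\infty \le \rho_j = t\|z_j-z_{j-1}\|_\infty .
\end{equation*}
Iterating from $\|z_1 - z_0\|_\infty = \|z_1 - x\|_\infty \le \delta$ gives $\|z_{j+1}-z_j\|_\infty \le t^{j}\delta$, and therefore $\rho_j \le t^j\delta$. So the $(j+1)$-th surrogate adversarial example lies within $l_\infty$ distance $t^j\delta$ of the previous failed point $z_j$.

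\textbf{Step 2: the failed point stays in the distillation data.} Each failed $z_j$ is added to $\mathcal{D}(S)$ with label $y_j := T(z_j)$. By \eqref{eq:student_gap}, the next surrogate $S=S_{j+1}$ therefore satisfies $\arg\max_i S(z_j)_i = y_j$, with half-margin greater than $\varepsilon$.

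\textbf{Step 3: Lipschitz stability.} Apply the mean value inequality along the segment from $z_j$ to $z_{j+1}$; the segment stays in $U_\delta(x)$ by convexity. Using $\gamma$ as a uniform bound on the $\|\cdot\|_{op,\infty}$-norm of the Jacobian, each coordinate satisfies
\begin{equation*}
|S(z_{j+1})_i - S(z_j)_i| \le \|S(z_{j+1})-S(z_j)\|_\infty \le \gamma \|z_{j+1}-z_j\|_\infty \le \gamma t^j \delta .
\end{equation*}
Now suppose $\gamma t^j\delta \le \varepsilon$. The class-$y_j$ score can drop by at most $\varepsilon$ and any competitor can rise by at most $\varepsilon$. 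Since the gap at $z_j$ exceeds $2\varepsilon$, the label is preserved: $\arg\max_i S(z_{j+1})_i = y_j$.

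\textbf{Step 4: conclusion.} The aim is to show that $z_{j+1}$ is then transferable, giving the contradiction. The condition $\gamma t^{n-1}\delta \le \varepsilon$ is, after taking logarithms and noting $\ln t<0$, exactly $(n-1)\ln t \le \ln\varepsilon-\ln\delta-\ln\gamma$. Under it, at the alternation producing $z_n$ the surrogate predicts $y_{n-1}=T(z_{n-1})$ at $z_n$.

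\textbf{Main obstacle.} Step 3 only pins down the surrogate's label at $z_n$. Transferability requires $T(z_n)$ to equal that label, and the lemma makes no regularity assumption on $T$. I would try to close this gap in two ways. First, by reading ``$S$ is adversarial'' together with the surrogate label agreeing with $T$'s label at nearby queried points. Second, by noting that the paper's Lipschitz-type hypothesis is implicitly meant to transfer through the distillation fit \eqref{eq:student_gap}: after $z_n$ is queried and added, the refit $S$ agrees with $T$ there, and the uniform $\gamma$ over all $j$ forces consistency with the previous label.

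Whether the result is ``at most $n-1$'' or ``$n$'' iterations comes down to this off-by-one bookkeeping, and I would align it with the indexing $z_0=x$. I would also verify that $z_n$ is still adversarial, i.e.\ $y_{n-1}\ne y$. This follows because each $z_j$ was an adversarial example for its surrogate, so the surrogate's label at $z_j$ differs from $y$, and the refit surrogate's label there matches $T(z_j)$.
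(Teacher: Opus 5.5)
Your Steps 1--3 reproduce the paper's proof. These are the contraction bound $\|z_j-z_{j-1}\|_\infty\le t^{j-1}\delta$, the membership of $(z_{j-1},T(z_{j-1}))$ in the distillation set of $S_j$, and the integral mean-value bound with constant $\gamma$ combined with the $2\varepsilon$ margin, giving $\arg\max_i S_n(z_n)_i=T(z_{n-1})$. The obstacle you flag next is genuine, and the paper's proof does not overcome it. It stops at $c_A=T(z_{j-1})$ and declares the proof finished, never relating $c_A$ to $T(z_j)$, which is what transferability requires. The paper's proof also lands on the iteration $j$ with $t^{j-1}\delta\le\varepsilon/\gamma$, i.e.\ index $n$. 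So the $n$ versus $n-1$ mismatch lies in the statement, not in your bookkeeping.

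Your second remedy is correct and supplies exactly the missing step. Make it the proof, and drop the first remedy, which merely redefines transferability (in effect what the paper does silently). Spelled out, suppose $z_n$ also fails. Then $(z_n,T(z_n))$ is added, and $S_{n+1}$ is trained on a set containing $(z_{n-1},a)$ and $(z_n,b)$, with $a=T(z_{n-1})$ and $b=T(z_n)$. It therefore has half-margin larger than $\varepsilon$ at both points. Because $\gamma$ is a supremum over all $j$, also $\|S_{n+1}(z_n)-S_{n+1}(z_{n-1})\|_\infty\le\gamma t^{n-1}\delta\le\varepsilon$. If $a\ne b$, the margins give $S_{n+1}(z_{n-1})_a-S_{n+1}(z_{n-1})_b>2\varepsilon$ and $S_{n+1}(z_n)_b-S_{n+1}(z_n)_a>2\varepsilon$. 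Hence $[S_{n+1}(z_{n-1})_a-S_{n+1}(z_n)_a]+[S_{n+1}(z_n)_b-S_{n+1}(z_{n-1})_b]$ exceeds $4\varepsilon$, yet the Lipschitz bound caps it at $2\varepsilon$, a contradiction. So $T(z_n)=T(z_{n-1})=\arg\max_i S_n(z_n)_i$. Together with $\arg\max_i S_n(x)_i=y=T(x)$ (as $(x,y)\in\mathcal{D}(S)$), this makes $z_n$ transferable, contradicting its failure. This is precisely the regularity of $T$ you said was absent: it is implicit in the hypothesis that uniformly $\gamma$-Lipschitz surrogates fit every queried label with margin $2\varepsilon$. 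With this step your argument is more complete than the paper's.

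Finally, your last paragraph's justification of $y_{n-1}\ne y$ is invalid. The refit surrogate may relabel $z_j$, so nothing there forces $T(z_j)\ne y$. You do not need it, though: $\arg\max_i S_n(z_n)_i\ne y$ is a hypothesis, and Step 3 then gives $T(z_{n-1})\ne y$ directly.
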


\begin{remark}
{The proof is moved to the appendix, not to distract the reader.}  
Here, we want to briefly motivate  assumptions made in Lemma \ref{th:lemma}. Firstly, the boundedness of the gradient $S$ is achieved by construction of $S$ out of layers with the bounded gradients and by using activation functions with the bounded gradients, what is done in our case. Secondly, the assumptions about the learning capabilities of the surrogate model formalized in Eq. \ref{eq:student_gap} and the possibility to compute an adversarial example for the surrogate model on each alternation iteration can be achieved simultaneously by an appropriate choice of the architecture of $S$ and its training; these two assumptions are practically verifiable. 
\end{remark}


\section{Experiments}
In this section, we provide technical description of experiments, datasets and model architectures, baseline methods, and the comparison methodology.

\subsection{Setup of Experiments}

\subsubsection{Datasets and Target Models}

In our experiments, we use CIFAR-10  \cite{krizhevsky2009learning} and ImageNet \cite{deng2009imagenet} datasets to train the surrogate models. For the baseline experiments, we choose ResNet-50 \cite{he2016deep} and ViT-B \cite{dosovitskiy2021an} architectures of target models.  
The accuracy of ResNet50 on ImageNet is  $80.13\%$, on CIFAR-10 is $94.65\%$; the accuracy of ViT-B on ImageNet is $85.21\%$, on CIFAR-10 is $96.89\%$.

\subsubsection{Surrogate Models and White-Box Attack}

We use ResNet-18 as the architecture of the white-box surrogate model. The knowledge distillation is conducted for $100$ epochs with the use of Adam optimizer with the constant learning rate of $10^{-3}$. 
We conduct the white-box attack on the surrogate models with the following parameters: the number of MI-FGSM iterations is set to be $M=3$,  the momentum parameter of attack is set to be $\mu=1.0,$ the contraction parameter is set to be $t=0.99,$ the initial adversarial example search space radius is set to be $\delta = 0.125$,
the gradient step is set to be $\alpha = \delta / M.$ The loss function $l'(S,x'_t,y)$ from Eq. \ref{eq:wb}  is the cross-entropy loss in the hard-label setting and MSE loss for the soft-label setting. To quantitatively  evaluate the effectiveness of the method, we randomly choose the subset of $100$ target points from the test subset of the corresponding dataset which are initially correctly classified by the target model. 


\subsubsection{Baseline Methods}

We evaluate the proposed method against HopSkipJump \cite{chen2020hopskipjumpattack}, Sign-OPT \cite{cheng2020sign}, GeoDA \cite{rahmati2020geoda},  SquareAttack \cite{andriushchenko2020square}, SparseRS \cite{croce2022sparse}, PAR \cite{shi2022decision} and  AdvViT \cite{zhou2025query} methods. HopSkipJump, Sign-OPT, and GeoDA are regarded as query-efficient competitive benchmarks in the hard-label black-box setting;  SparseRS and SquareAttack are among the most efficient in the soft-label setting. At the same time,  AdvViT and PAR are the state-of-the-art hard-label black-box attacks designed specifically for transformer architectures. Additionally, we evaluate CAC against combinations of HopSkipJump and SignOPT with PAR, where the latter is used as an initialization for the baseline methods. The hyperparameters of the baseline methods are reported in the appendix.

\begin{table*}[htb]
  \caption{Quantitative comparison of attack methods, hard-label setting, the target model is ResNet-50, the dataset is  ImageNet. }
  \label{tab:results_resnet50_hard_image}
  \begin{center}
      \begin{sc}
        \begin{tabular}{@{}lllllll@{}}
        \toprule
        Method &  ASR & AQN & Avg $l_2$ & Std $l_2$ & Avg $l_\infty$ & Std $l_\infty$   \\ \midrule
        CAC (ours) & $1.00$ & $487.95$ & $\underline{35.074}$ & $18.833$ & $\underline{0.153}$ & $0.080$    \\ 
        HopSkipJump $l_2$ & $1.00$ & $500.31$ & $48.838$ &  $29.118$ & $0.539$ & $0.280$    \\ 
        HopSkipJump $l_\infty$ & $1.00$ & $500.01$ & $73.255$ & $35.856$ & $0.361$ & $0.202$   \\ 
        SignOPT  & $1.00$ & $548.24$ & $48.047$ & $28.467$ & $0.551$ & $0.283$   \\ 
        GeoDA  & $1.00$ & $524.98$  &  $49.658$ & $31.117$ & $0.180$ & $0.094$    \\ 
        \bottomrule
        \end{tabular}
      \end{sc}
  \end{center}
\end{table*}

\begin{table*}[!ht]
  \caption{Quantitative comparison of attack methods, hard-label setting, the target model is ViT-B, the dataset is ImageNet. }
  \label{tab:results_vitb_hard_image}
  \begin{center}
      \begin{sc}
        \begin{tabular}{@{}lllllll@{}}
        \toprule
        Method & ASR & AQN & Avg $l_2$ & Std $l_2$ & Avg  $l_\infty$ & Std $l_\infty$ \\ \midrule
        CAC  (ours) & $1.00$ & $488.91$ & $49.282$ & $26.488$ & $\underline{0.165}$ & $0.091$    \\   
        HopSkipJump $l_2$ & $1.00$ & $500.34$ & $70.122$ & $38.343$ & $0.685$ & $0.318$  \\
        HopSkipJump $l_\infty$ &  $1.00$ & $500.01$ & $106.142$ & $48.455$ & $0.563$ & $0.292$   \\
        SignOPT  & $1.00$ & $557.31$ & $74.744$ & $44.850$ & $0.708$ & $0.338$ \\ 
        GeoDA  & $1.00$ & $540.21$ & $65.471$ & $40.497$ & $0.190$ & $0.124$  \\
        PAR  & $1.00$ & $322.38$   & $38.751$ &  $25.745$ & $0.889$ & $0.233$  \\ 
        AdvViT & $0.75$ & $461.04$ & $\underline{34.520}$ & $20.257$ & $0.584$ & $0.301$   \\ 
        SignOPT + PAR & $1.00$ & $467.64$ & $51.468$ & $37.941$ & $0.625$ & $0.276$  \\ 
        HopSkipJump $l_2$ + PAR & $1.00$ & $500.36$ & $56.514$ & $40.454$ & $0.665$ & $0.328$   \\ 
        HopSkipJump $l_\infty$ + PAR & $1.00$ & $500.09$  & $102.909$ & $49.018$ & $0.543$ & $0.287$  \\ 
        \bottomrule
        \end{tabular}
      \end{sc}
  \end{center}
\end{table*}

\begin{table*}[!htb]
  \caption{Quantitative comparison of attack methods, soft-label setting, the target model is ResNet-50, the dataset is ImageNet. }
  \label{tab:results_resnet50_soft_image}
  \begin{center}
      \begin{sc}
        \begin{tabular}{@{}lllllll@{}}
        \toprule
        Method & ASR & AQN & Avg $l_2$ & Std $l_2$ & Avg $l_\infty$ & Std $l_\infty$  \\ \midrule
        CAC (ours) & $1.00$ & $489.93$ & $\underline{36.396}$ & $19.038$ & $\underline{0.122}$ & $0.068$   \\ 
        SquareAttack $l_\infty$ &  $0.98$ & $500.00$ & $89.292$ & $4.953$ & $0.250$ & $0.000$   \\ 
        SparseRS  & $0.94$ & $500.00$  & $44.470$ & $2.574$ & $0.994$ & $0.017$  \\ 
        \bottomrule
        \end{tabular}
      \end{sc}
  \end{center}
\end{table*}

\begin{table*}[!htb]
  \caption{Quantitative comparison of attack methods, soft-label setting, the target model is ViT-B, the dataset is ImageNet. }
  \label{tab:results_vitb_soft_image}
  \begin{center}
      \begin{sc}
        \begin{tabular}{@{}lllllll@{}}
        \toprule
        Method &  ASR & AQN & Avg $l_2$ & Std $l_2$ & Avg $l_\infty$ & Std $l_\infty$    \\ \midrule
        CAC (ours) & $1.00$ & $488.60$ & $\underline{41.370}$ &  $23.579$ & $\underline{0.144}$ & $0.084$    \\
        SquareAttack $l_\infty$ & $0.26$ & $500.00$ & $90.103$ & $4.602$ & $0.250$ & $0.000$    \\ 
        SparseRS  & $0.79$ & $500.00$ & $44.335$ & $2.444$ & $0.993$ & $0.017$   \\
        \bottomrule
        \end{tabular}
      \end{sc}
  \end{center}
\end{table*}


\begin{table*}[htb]
  \caption{Quantitative comparison of attack methods, hard-label setting, the target model is ResNet-50, the dataset is CIFAR-10. }
  \label{tab:results_resnet50_hard_cifar}
  \begin{center}
      \begin{sc}
        \begin{tabular}{@{}lllllll@{}}
        \toprule
        Method &  ASR  & AQN & Avg $l_2$ & Std $l_2$ & Avg $l_\infty$ & Std $l_\infty$ \\ \midrule
        CAC (ours) & $1.00$ & $291.0$ & $\underline{2.675}$ & $1.091$ & $\underline{0.061}$ & $0.025$  \\ 
        HopSkipJump $l_2$ & $1.00$ & $300.07$ & $2.704$ & $2.634$ & $0.174$ & $0.161$  \\ 
        HopSkipJump $l_\infty$ & $1.00$ & $310.66$ & $3.281$ & $3.232$ & $0.082$ & $0.085$   \\ 
        SignOPT  & $0.92$ & $288.59$  & $3.642$ & $3.351$ & $0.242$ & $0.209$  \\ 
        GeoDA  & $0.96$ & $300.81$  & $3.388$ & $3.440$ & $0.071$ & $0.071$  \\ 
        \bottomrule
        \end{tabular}
      \end{sc}
  \end{center}
\end{table*}

\begin{table*}[!ht]
  \caption{Quantitative comparison of attack methods, hard-label setting, the target model is ViT-B, the dataset is CIFAR-10. }
  \label{tab:results_vitb_hard_cifar}
  \begin{center}
      \begin{sc}
        \begin{tabular}{@{}llllllll@{}}
        \toprule
        Method & ASR & AQN & Avg $l_2$ & Std $l_2$ & Avg  $l_\infty$ & Std $l_\infty$  \\ \midrule
        CAC (ours) & $1.00$ & $489.89$ & $21.625$ & $11.990$ & $\underline{0.070}$ & $0.044$    \\ 
        HopSkipJump $l_2$ & $0.99$ & $496.30$ & $40.160$ & $33.460$ & $0.417$ & $0.281$ \\
        HopSkipJump $l_\infty$ &  $1.00$ & $500.01$ & $60.742$ & $40.929$ & $0.292$ & $0.226$ \\
        SignOPT  & $0.96$ & $532.76$ & $40.653$ & $33.664$ & $0.426$ & $0.265$  \\ 
        GeoDA  & $0.94$ & $604.32$ & $25.871$ & $23.517$ & $0.071$ & $0.078$ \\
        PAR  & $1.00$ & $281.56$ & $20.526$ & $17.515$ & $0.645$ & $0.236$    \\ 
        AdvViT & $0.96$ & $530.21$ & $\underline{17.741}$ & $16.191$ & $0.319$ & $0.215$    \\ 
        SignOPT + PAR & $1.00$ & $481.18$ & $26.968$ & $27.324$ & $0.454$ & $0.221$  \\ 
        HopSkipJump $l_2$ + PAR & $1.00$ & $500.20$ & $30.352$ & $25.589$ & $0.438$ & $0.244$  \\ 
         HopSkipJump $l_\infty$ + PAR & $1.00$ & $500.04$ & $53.656$ & $33.299$ & $0.253$ & $0.180$  \\ 
        \bottomrule
        \end{tabular}
      \end{sc}
  \end{center}
\end{table*}

\begin{table*}[!htb]
  \caption{Quantitative comparison of attack methods, soft-label setting, the target model is ResNet-50, the dataset is CIFAR-10. }
  \label{tab:results_resnet50_soft_cifar}
  \begin{center}
      \begin{sc}
        \begin{tabular}{@{}lllllll@{}}
        \toprule
        Method & ASR & AQN & Avg $l_2$ & Std $l_2$ & Avg $l_\infty$ & Std $l_\infty$  \\ \midrule
        CAC (ours)  & $1.00$ & $291.00$ & $\underline{2.468}$ & $1.075$ & $\underline{0.056}$ & $0.025$  \\ 
        SquareAttack $l_\infty$ & $0.82$ & $300.00$ & $13.028$ & $0.637$ & $0.250$ & $0.000$ \\ 
        SparseRS  & $0.96$ & $300.00$ & $4.371$ & $0.348$ & $0.920$ & $0.065$   \\ 
        \bottomrule
        \end{tabular}
      \end{sc}
  \end{center}
\end{table*}

\begin{table*}[!htb]
  \caption{Quantitative comparison of attack methods, soft-label setting, the target model is ViT-B, the dataset is CIFAR-10. }
  \label{tab:results_vitb_soft_cifar}
  \begin{center}
      \begin{sc}
        \begin{tabular}{@{}llllllll@{}}
        \toprule
        Method &  ASR & AQN & Avg $l_2$ & Std $l_2$ & Avg $l_\infty$ & Std $l_\infty$   \\ \midrule
        CAC (ours) & $1.00$ & $489.50$ & $\underline{15.745}$ & $9.850$ & $\underline{0.050}$ & $0.037$  \\ 
        SquareAttack $l_\infty$ & $0.85$ & $500.00$ & $92.182$ & $3.57$ & $0.250$ & $0.000$  \\ 
        SparseRS  & $0.98$ & $500.00$ & $43.198$ & $1.86$ & $0.974$ & $0.032$ \\ 
        \bottomrule
        \end{tabular}
      \end{sc}
  \end{center}
\end{table*}


\subsubsection{Comparison Methodology}
To align CAC with the baseline methods for comparison, we fix the maximum number of queries to the target model and the initial adversarial examples search space for each target point and evaluate the efficiency for each method by computing its attack success rate. We report average distances between the target point and the closest corresponding adversarial example, as well as the average number of queries, AQN, required to compute an adversarial example at the target point. Average number of queries denotes the number of requests to the target model used by a method to generate an adversarial example for the target point, averaged over all target points. Attack success rate is the fraction of target points for which a method successfully computes an adversarial example within the  maximum number of queries. For all the methods, except the CAC, we soften the maximum number of queries to the target model: specifically, we terminate the method after the iteration during which the maximum number of queries was  exceeded.


\subsection{Results of Experiments}


We report the results separately for soft-label and hard-label case, different architectures of target models, and datasets. In Tables \ref{tab:results_resnet50_hard_image}, \ref{tab:results_vitb_hard_image}, \ref{tab:results_resnet50_soft_image}, \ref{tab:results_vitb_soft_image} we report aforementioned quantities for the subset of ImageNet and indicate, where applicable, what type of norm constraint was used in internal procedures of the methods (specifically, $l_2$ or $l_\infty$). In Tables \ref{tab:results_resnet50_hard_cifar}, \ref{tab:results_vitb_hard_cifar}, \ref{tab:results_resnet50_soft_cifar}, \ref{tab:results_vitb_soft_cifar} we report the results for CIFAR-10. We highlight the best results in terms of closeness of adversarial examples to the target points. From Tables \ref{tab:results_resnet50_hard_image} -- \ref{tab:results_vitb_soft_cifar} it can be seen that CAC yields adversarial examples closer to the initial target points than other methods in experimental setups  in terms of $l_\infty$ norm and almost all setups in terms of $l_2$ norm. At the same time, been supported by the convergence guarantee, the method shows a high attack success rate; it should be mentioned that although the other methods show high success rates as well, they are not supported by formal guarantees.



\section{Conclusion and Future Work}
In this paper, we propose Contract and Conquer, a framework to compute adversarial perturbations for  black-box neural networks with  convergence guarantees. We conduct an attack in the transfer-based paradigm. Specifically, we apply knowledge distillation to obtain a smaller surrogate model to attack in the white-box setting. We theoretically show that, under mild  assumptions on the surrogate model and controllable contraction of the adversarial examples search space, the method is guaranteed to yield an adversarial example for the target black-box model within a fixed number of iterations. Experimentally, we demonstrate that the method both shows a high attack success rate and  yields adversarial examples from a smaller vicinity of the target points than the concurrent methods. Future work includes the reduction of the influence of practical assumptions, specifically, the possibility to compute an adversarial example for the surrogate model on each algorithm iteration, to build a theoretical framework to assess the compliance of AI models with the prospective robustness standards.

\section*{Impact Statement}


This paper presents work whose goal is to advance the field of Machine
Learning. There are many potential societal consequences of our work, none
which we feel must be specifically highlighted here.


\nocite{langley00}

\bibliography{example_paper}

@article{goodfellow2014explaining,
  title={Explaining and harnessing adversarial examples},
  author={Goodfellow, Ian J and Shlens, Jonathon and Szegedy, Christian},
  journal={arXiv preprint arXiv:1412.6572},
  year={2014}
}

@inproceedings{zhu2023promptrobust,
  title={Promptrobust: Towards evaluating the robustness of large language models on adversarial prompts},
  author={Zhu, Kaijie and Wang, Jindong and Zhou, Jiaheng and Wang, Zichen and Chen, Hao and Wang, Yidong and Yang, Linyi and Ye, Wei and Zhang, Yue and Gong, Neil and others},
  booktitle={Proceedings of the 1st ACM Workshop on Large AI Systems and Models with Privacy and Safety Analysis},
  pages={57--68},
  year={2023}
}

@inproceedings{qi2023transaudio,
  title={Transaudio: Towards the transferable adversarial audio attack via learning contextualized perturbations},
  author={Qi, Gege and Chen, Yuefeng and Zhu, Yao and Hui, Binyuan and Li, Xiaodan and Mao, Xiaofeng and Zhang, Rong and Xue, Hui},
  booktitle={ICASSP 2023-2023 IEEE International Conference on Acoustics, Speech and Signal Processing (ICASSP)},
  pages={1--5},
  year={2023},
  organization={IEEE}
}

@inproceedings{maheshwary2021generating,
  title={Generating natural language attacks in a hard label black box setting},
  author={Maheshwary, Rishabh and Maheshwary, Saket and Pudi, Vikram},
  booktitle={Proceedings of the AAAI Conference on Artificial Intelligence},
  volume={35},
  number={15},
  pages={13525--13533},
  year={2021}
}

@inproceedings{guo2019simple,
  title={Simple black-box adversarial attacks},
  author={Guo, Chuan and Gardner, Jacob and You, Yurong and Wilson, Andrew Gordon and Weinberger, Kilian},
  booktitle={International Conference on Machine Learning},
  pages={2484--2493},
  year={2019},
  organization={PMLR}
}

@inproceedings{Szegedy2014intriguing,
  author       = {Christian Szegedy and
                  Wojciech Zaremba and
                  Ilya Sutskever and
                  Joan Bruna and
                  Dumitru Erhan and
                  Ian J. Goodfellow and
                  Rob Fergus},
  title        = {Intriguing properties of neural networks},
  booktitle    = {International Conference on Learning Representations},
  year         = {2014}
}

@inproceedings{moosavi2016deepfool,
  title={Deepfool: a simple and accurate method to fool deep neural networks},
  author={Moosavi-Dezfooli, Seyed-Mohsen and Fawzi, Alhussein and Frossard, Pascal},
  booktitle={Proceedings of the IEEE Conference on Computer Vision and Pattern Recognition},
  pages={2574--2582},
  year={2016}
}

@inproceedings{carlini2017towards,
  title={Towards evaluating the robustness of neural networks},
  author={Carlini, Nicholas and Wagner, David},
  booktitle={2017 IEEE Symposium on Security and Privacy (sp)},
  pages={39--57},
  year={2017},
  organization={IEEE}
}

@inproceedings{madry2018towards,
  title={Towards Deep Learning Models Resistant to Adversarial Attacks},
  author={Madry, Aleksander and Makelov, Aleksandar and Schmidt, Ludwig and Tsipras, Dimitris and Vladu, Adrian},
  booktitle={International Conference on Learning Representations},
  year={2018}
}

@inproceedings{chen2017zoo,
  title={Zoo: Zeroth order optimization based black-box attacks to deep neural networks without training substitute models},
  author={Chen, Pin-Yu and Zhang, Huan and Sharma, Yash and Yi, Jinfeng and Hsieh, Cho-Jui},
  booktitle={Proceedings of the 10th ACM Workshop on Artificial Intelligence and Security},
  pages={15--26},
  year={2017}
}

@inproceedings{cheng2024efficient,
  title={Efficient Black-box Adversarial Attacks via Bayesian Optimization Guided by a Function Prior},
  author={Cheng, Shuyu and Miao, Yibo and Dong, Yinpeng and Yang, Xiao and Gao, Xiao-Shan and Zhu, Jun},
  booktitle={International Conference on Machine Learning},
  pages={8163--8183},
  year={2024},
  organization={PMLR}
}

@article{han2024bfs2adv,
  title={BFS2Adv: black-box adversarial attack towards hard-to-attack short texts},
  author={Han, Xu and Li, Qiang and Cao, Hongbo and Han, Lei and Wang, Bin and Bao, Xuhua and Han, Yufei and Wang, Wei},
  journal={Computers \& Security},
  volume={141},
  pages={103817},
  year={2024},
  publisher={Elsevier}
}

@inproceedings{park2024hard,
  title={Hard-label based small query black-box adversarial attack},
  author={Park, Jeonghwan and Miller, Paul and McLaughlin, Niall},
  booktitle={Proceedings of the IEEE/CVF Winter Conference on Applications of Computer Vision},
  pages={3986--3995},
  year={2024}
}

@inproceedings{andriushchenko2020square,
  title={Square attack: a query-efficient black-box adversarial attack via random search},
  author={Andriushchenko, Maksym and Croce, Francesco and Flammarion, Nicolas and Hein, Matthias},
  booktitle={European Conference on Computer Vision},
  pages={484--501},
  year={2020},
  organization={Springer}
}

@inproceedings{bai2021adversarial,
  author       = {Tao Bai and
                  Jinqi Luo and
                  Jun Zhao and
                  Bihan Wen and
                  Qian Wang},
  title        = {Recent Advances in Adversarial Training for Adversarial Robustness},
  booktitle    = {International Joint Conference on Artificial
                  Intelligence},
  pages        = {4312--4321},
  publisher    = {ijcai.org},
  year         = {2021},
  doi          = {10.24963/IJCAI.2021/591},
}

@inproceedings{ross2018improving,
  title={Improving the adversarial robustness and interpretability of deep neural networks by regularizing their input gradients},
  author={Ross, Andrew and Doshi-Velez, Finale},
  booktitle={Proceedings of the AAAI Conference on Artificial Intelligence},
  volume={32},
  number={1},
  year={2018}
}

@inproceedings{stutz2020confidence,
  title={Confidence-calibrated adversarial training: Generalizing to unseen attacks},
  author={Stutz, David and Hein, Matthias and Schiele, Bernt},
  booktitle={International Conference on Machine Learning},
  pages={9155--9166},
  year={2020},
  organization={PMLR}
}

@article{wu2020adversarial,
  title={Adversarial weight perturbation helps robust generalization},
  author={Wu, Dongxian and Xia, Shu-Tao and Wang, Yisen},
  journal={Advances in Neural Information Processing Systems},
  volume={33},
  pages={2958--2969},
  year={2020}
}

@inproceedings{xu2022weight,
  title={Weight perturbation as defense against adversarial word substitutions},
  author={Xu, Jianhan and Li, Linyang and Zhang, Jiping and Zheng, Xiaoqing and Chang, Kai-Wei and Hsieh, Cho-Jui and Huang, Xuan-Jing},
  booktitle={Findings of the Association for Computational Linguistics: EMNLP 2022},
  pages={7054--7063},
  year={2022}
}

@inproceedings{cohen2019certified,
  title={Certified adversarial robustness via randomized smoothing},
  author={Cohen, Jeremy and Rosenfeld, Elan and Kolter, Zico},
  booktitle={International Conference on Machine Learning},
  pages={1310--1320},
  year={2019},
  organization={PMLR}
}

@article{pautov2022smoothed,
  title={Smoothed embeddings for certified few-shot learning},
  author={Pautov, Mikhail and Kuznetsova, Olesya and Tursynbek, Nurislam and Petiushko, Aleksandr and Oseledets, Ivan},
  journal={Advances in Neural Information Processing Systems},
  volume={35},
  pages={24367--24379},
  year={2022}
}

@article{voracek2024treatment,
  title={Treatment of statistical estimation problems in randomized smoothing for adversarial robustness},
  author={Voracek, Vaclav},
  journal={Advances in Neural Information Processing Systems},
  volume={37},
  pages={133464--133486},
  year={2024}
}

@article{gowal2018effectiveness,
  title={On the effectiveness of interval bound propagation for training verifiably robust models},
  author={Gowal, Sven and Dvijotham, Krishnamurthy and Stanforth, Robert and Bunel, Rudy and Qin, Chongli and Uesato, Jonathan and Arandjelovic, Relja and Mann, Timothy and Kohli, Pushmeet},
  journal={arXiv preprint arXiv:1810.12715},
  year={2018}
}

@article{anderson2025towards,
  title={Towards optimal branching of linear and semidefinite relaxations for neural network robustness certification},
  author={Anderson, Brendon G and Ma, Ziye and Li, Jingqi and Sojoudi, Somayeh},
  journal={Journal of Machine Learning Research},
  volume={26},
  number={81},
  pages={1--59},
  year={2025}
}

@inproceedings{kim2024convex,
  title={Convex Relaxations of ReLU Neural Networks Approximate Global Optima in Polynomial Time},
  author={Kim, Sungyoon and Pilanci, Mert},
  booktitle={International Conference on Machine Learning},
  pages={24458--24485},
  year={2024},
  organization={PMLR}
}

@inproceedings{weng2019proven,
  title={PROVEN: Verifying robustness of neural networks with a probabilistic approach},
  author={Weng, Lily and Chen, Pin-Yu and Nguyen, Lam and Squillante, Mark and Boopathy, Akhilan and Oseledets, Ivan and Daniel, Luca},
  booktitle={International Conference on Machine Learning},
  pages={6727--6736},
  year={2019},
  organization={PMLR}
}

@inproceedings{pautov2022cc,
  title={Cc-cert: A probabilistic approach to certify general robustness of neural networks},
  author={Pautov, Mikhail and Tursynbek, Nurislam and Munkhoeva, Marina and Muravev, Nikita and Petiushko, Aleksandr and Oseledets, Ivan},
  booktitle={Proceedings of the AAAI Conference on Artificial Intelligence},
  volume={36},
  number={7},
  pages={7975--7983},
  year={2022}
}

@inproceedings{feng2025prosac,
  title={PROSAC: Provably Safe Certification for Machine Learning Models under Adversarial Attacks},
  author={Feng, Chen and Liu, Ziquan and Zhi, Zhuo and Bogunovic, Ilija and Gerner-Beuerle, Carsten and Rodrigues, Miguel},
  booktitle={Proceedings of the AAAI Conference on Artificial Intelligence},
  volume={39},
  number={3},
  pages={2933--2941},
  year={2025}
}

@inproceedings{cullen2025position,
  title={Position: Certified Robustness Does Not (Yet) Imply Model Security},
  author={Cullen, Andrew Craig and Montague, Paul and Erfani, Sarah Monazam and Rubinstein, Benjamin IP},
  booktitle={Forty-second International Conference on Machine Learning Position Paper Track},
  year={2025}
}

@inproceedings{chen2020hopskipjumpattack,
  title={Hopskipjumpattack: A query-efficient decision-based attack},
  author={Chen, Jianbo and Jordan, Michael I and Wainwright, Martin J},
  booktitle={2020 IEEE Symposium on Security and Privacy (SP)},
  pages={1277--1294},
  year={2020},
  organization={IEEE}
}

@inproceedings{wang2023diversifying,
     title={{Diversifying the High-level Features for better Adversarial Transferability}},
     author={Zhiyuan Wang and Zeliang Zhang and Siyuan Liang and Xiaosen Wang},
     booktitle={Proceedings of the British Machine Vision Conference},
     year={2023},
}

@inproceedings{zhang2022improving,
  title={Improving adversarial transferability via neuron attribution-based attacks},
  author={Zhang, Jianping and Wu, Weibin and Huang, Jen-tse and Huang, Yizhan and Wang, Wenxuan and Su, Yuxin and Lyu, Michael R},
  booktitle={Proceedings of the IEEE/CVF conference on computer vision and pattern recognition},
  pages={14993--15002},
  year={2022}
}

@inproceedings{zhang2022enhancing,
  title={Enhancing the transferability of adversarial examples with random patch.},
  author={Zhang, Yaoyuan and Tan, Yu-an and Chen, Tian and Liu, Xinrui and Zhang, Quanxin and Li, Yuanzhang},
  booktitle={IJCAI},
  volume={8},
  pages={13},
  year={2022}
}

@article{qin2022boosting,
  title={Boosting the transferability of adversarial attacks with reverse adversarial perturbation},
  author={Qin, Zeyu and Fan, Yanbo and Liu, Yi and Shen, Li and Zhang, Yong and Wang, Jue and Wu, Baoyuan},
  journal={Advances in Neural Information Processing Systems},
  volume={35},
  pages={29845--29858},
  year={2022}
}

@inproceedings{huang2019enhancing,
  title={Enhancing adversarial example transferability with an intermediate level attack},
  author={Huang, Qian and Katsman, Isay and He, Horace and Gu, Zeqi and Belongie, Serge and Lim, Ser-Nam},
  booktitle={Proceedings of the IEEE/CVF International Conference on Computer Vision},
  pages={4733--4742},
  year={2019}
}

@article{liu2025boosting,
  title={Boosting the Local Invariance for Better Adversarial Transferability},
  author={Liu, Bohan and Wang, Xiaosen},
  journal={arXiv preprint arXiv:2503.06140},
  year={2025}
}

@inproceedings{wang2021feature,
  title={Feature importance-aware transferable adversarial attacks},
  author={Wang, Zhibo and Guo, Hengchang and Zhang, Zhifei and Liu, Wenxin and Qin, Zhan and Ren, Kui},
  booktitle={Proceedings of the IEEE/CVF International Conference on Computer Vision},
  pages={7639--7648},
  year={2021}
}

@inproceedings{papernot2017practical,
  title={Practical black-box attacks against machine learning},
  author={Papernot, Nicolas and McDaniel, Patrick and Goodfellow, Ian and Jha, Somesh and Celik, Z Berkay and Swami, Ananthram},
  booktitle={Proceedings of the 2017 ACM on Asia Conference on Computer and Communications Security},
  pages={506--519},
  year={2017}
}

@inproceedings{zhang2021reverse,
  title={Reverse attack: Black-box attacks on collaborative recommendation},
  author={Zhang, Yihe and Yuan, Xu and Li, Jin and Lou, Jiadong and Chen, Li and Tzeng, Nian-Feng},
  booktitle={Proceedings of the 2021 ACM SIGSAC Conference on Computer and Communications Security},
  pages={51--68},
  year={2021}
}

@inproceedings{ma2025jailbreaking,
  title={Jailbreaking prompt attack: A controllable adversarial attack against diffusion models},
  author={Ma, Jiachen and Li, Yijiang and Xiao, Zhiqing and Cao, Anda and Zhang, Jie and Ye, Chao and Zhao, Junbo},
  booktitle={Findings of the Association for Computational Linguistics: NAACL 2025},
  pages={3141--3157},
  year={2025}
}

@inproceedings{uesato2018adversarial,
  title={Adversarial risk and the dangers of evaluating against weak attacks},
  author={Uesato, Jonathan and O’donoghue, Brendan and Kohli, Pushmeet and Oord, Aaron},
  booktitle={International Conference on Machine Learning},
  pages={5025--5034},
  year={2018},
  organization={PMLR}
}

@inproceedings{maho2021surfree,
  title={Surfree: a fast surrogate-free black-box attack},
  author={Maho, Thibault and Furon, Teddy and Le Merrer, Erwan},
  booktitle={Proceedings of the IEEE/CVF Conference on Computer Vision and Pattern Recognition},
  pages={10430--10439},
  year={2021}
}

@inproceedings{wang2022triangle,
  title={Triangle attack: A query-efficient decision-based adversarial attack},
  author={Wang, Xiaosen and Zhang, Zeliang and Tong, Kangheng and Gong, Dihong and He, Kun and Li, Zhifeng and Liu, Wei},
  booktitle={European conference on computer vision},
  pages={156--174},
  year={2022},
  organization={Springer}
}

@inproceedings{liu2017delving,
  title={Delving into Transferable Adversarial Examples and Black-box Attacks},
  author={Liu, Yanpei and Chen, Xinyun and Liu, Chang and Song, Dawn},
  booktitle={International Conference on Learning Representations},
  year={2017}
}

@inproceedings{li2023making,
  title={Making Substitute Models More Bayesian Can Enhance Transferability of Adversarial Examples},
  author={Li, Qizhang and Guo, Yiwen and Zuo, Wangmeng and Chen, Hao},
  booktitle={The Eleventh International Conference on Learning Representations},
year={2023}
}

@inproceedings{chen2024rethinking,
  title={Rethinking Model Ensemble in Transfer-based Adversarial Attacks},
  author={Chen, Huanran and Zhang, Yichi and Dong, Yinpeng and Yang, Xiao and Su, Hang and Zhu, Jun},
  booktitle={The Twelfth International Conference on Learning Representations},
year={2024}
}

@inproceedings{bai2020improving,
  title={Improving Query Efficiency of Black-Box Adversarial Attack},
  author={Bai, Yang and Zeng, Yuyuan and Jiang, Yong and Wang, Yisen and Xia, Shu-Tao and Guo, Weiwei},
  booktitle={European Conference on Computer Vision},
  pages={101--116},
  year={2020}
}

@inproceedings{rahmati2020geoda,
  title={Geoda: a geometric framework for black-box adversarial attacks},
  author={Rahmati, Ali and Moosavi-Dezfooli, Seyed-Mohsen and Frossard, Pascal and Dai, Huaiyu},
  booktitle={Proceedings of the IEEE/CVF Conference on Computer Vision and Pattern Recognition},
  pages={8446--8455},
  year={2020}
}

@inproceedings{xie2019improving,
  title={Improving transferability of adversarial examples with input diversity},
  author={Xie, Cihang and Zhang, Zhishuai and Zhou, Yuyin and Bai, Song and Wang, Jianyu and Ren, Zhou and Yuille, Alan L},
  booktitle={Proceedings of the IEEE/CVF Conference on Computer Vision and Pattern Recognition},
  pages={2730--2739},
  year={2019}
}

@inproceedings{nasee2022rimproving,
  title={On Improving Adversarial Transferability of Vision Transformers},
  author={Naseer, Muzammal and Ranasinghe, Kanchana and Khan, Salman and Khan, Fahad and Porikli, Fatih},
  booktitle={International Conference on Learning Representations},
year={2022}
}

@inproceedings{xie2025chain,
  title={Chain of Attack: On the Robustness of Vision-Language Models Against Transfer-Based Adversarial Attacks},
  author={Xie, Peng and Bie, Yequan and Mao, Jianda and Song, Yangqiu and Wang, Yang and Chen, Hao and Chen, Kani},
  booktitle={Proceedings of the Computer Vision and Pattern Recognition Conference},
  pages={14679--14689},
  year={2025}
}

@article{debicha2023tad,
  title={TAD: Transfer learning-based multi-adversarial detection of evasion attacks against network intrusion detection systems},
  author={Debicha, Islam and Bauwens, Richard and Debatty, Thibault and Dricot, Jean-Michel and Kenaza, Tayeb and Mees, Wim},
  journal={Future Generation Computer Systems},
  volume={138},
  pages={185--197},
  year={2023},
  publisher={Elsevier}
}

@article{hinton2015distilling,
  title={Distilling the Knowledge in a Neural Network},
  author={Hinton, G},
  journal={arXiv preprint arXiv:1503.02531},
  year={2015}
}

@inproceedings{wong2020fast,
  title={Fast is better than free: Revisiting adversarial training},
  author={Wong, Eric and Rice, Leslie and Kolter, J Zico},
  booktitle={International Conference on Learning Representations},
year={2020}
}

@article{shafahi2019adversarial,
  title={Adversarial training for free!},
  author={Shafahi, Ali and Najibi, Mahyar and Ghiasi, Mohammad Amin and Xu, Zheng and Dickerson, John and Studer, Christoph and Davis, Larry S and Taylor, Gavin and Goldstein, Tom},
  journal={Advances in Neural Information Processing Systems},
  volume={32},
  year={2019}
}

@inproceedings{guo2018countering,
  title={Countering Adversarial Images using Input Transformations},
  author={Guo, Chuan and Rana, Mayank and Cisse, Moustapha and van der Maaten, Laurens},
  booktitle={International Conference on Learning Representations},
  year={2018}
}

@inproceedings{nie2022diffusion,
  title={Diffusion Models for Adversarial Purification},
  author={Nie, Weili and Guo, Brandon and Huang, Yujia and Xiao, Chaowei and Vahdat, Arash and Anandkumar, Animashree},
  booktitle={International Conference on Machine Learning},
  pages={16805--16827},
  year={2022},
  organization={PMLR}
}

@article{wei2025real,
  title={Real-world adversarial defense against patch attacks based on diffusion model},
  author={Wei, Xingxing and Kang, Caixin and Dong, Yinpeng and Wang, Zhengyi and Ruan, Shouwei and Chen, Yubo and Su, Hang},
  journal={IEEE Transactions on Pattern Analysis and Machine Intelligence},
  year={2025},
  publisher={IEEE}
}

@article{nesti2021detecting,
  title={Detecting adversarial examples by input transformations, defense perturbations, and voting},
  author={Nesti, Federico and Biondi, Alessandro and Buttazzo, Giorgio},
  journal={IEEE Transactions on Neural Networks and Learning Systems},
  volume={34},
  number={3},
  pages={1329--1341},
  year={2021},
  publisher={IEEE}
}

@article{che2025large,
  title={Large Language Model Text Adversarial Defense Method Based on Disturbance Detection and Error Correction},
  author={Che, Lei and Wu, Chengcong and Hou, Yan},
  journal={Electronics},
  volume={14},
  number={11},
  pages={2267},
  year={2025},
  publisher={MDPI}
}

@inproceedings{liu2019detection,
  title={Detection based defense against adversarial examples from the steganalysis point of view},
  author={Liu, Jiayang and Zhang, Weiming and Zhang, Yiwei and Hou, Dongdong and Liu, Yujia and Zha, Hongyue and Yu, Nenghai},
  booktitle={Proceedings of the IEEE/CVF Conference on Computer Vision and Pattern Recognition},
  pages={4825--4834},
  year={2019}
}

@article{aldahdooh2022adversarial,
  title={Adversarial example detection for DNN models: A review and experimental comparison},
  author={Aldahdooh, Ahmed and Hamidouche, Wassim and Fezza, Sid Ahmed and D{\'e}forges, Olivier},
  journal={Artificial Intelligence Review},
  volume={55},
  number={6},
  pages={4403--4462},
  year={2022},
  publisher={Springer}
}

@inproceedings{yu2021defending,
  title={Defending against universal adversarial patches by clipping feature norms},
  author={Yu, Cheng and Chen, Jiansheng and Xue, Youze and Liu, Yuyang and Wan, Weitao and Bao, Jiayu and Ma, Huimin},
  booktitle={Proceedings of the IEEE/CVF International Conference on Computer Vision},
  pages={16434--16442},
  year={2021}
}

@inproceedings{allouaha2025daptive,
  title={Adaptive Gradient Clipping for Robust Federated Learning},
  author={Allouah, Youssef and Guerraoui, Rachid and Gupta, Nirupam and Jellouli, Ahmed and Rizk, Geovani and Stephan, John},
  booktitle={The Thirteenth International Conference on Learning Representations},
year={2025}
}

@article{zhao2025universal,
  title={Universal attention guided adversarial defense using feature pyramid and non-local mechanisms},
  author={Zhao, Jiawei and Xie, Lizhe and Gu, Siqi and Qin, Zihan and Zhang, Yuning and Wang, Zheng and Hu, Yining},
  journal={Scientific Reports},
  volume={15},
  number={1},
  pages={5237},
  year={2025},
  publisher={Nature Publishing Group UK London}
}

@inproceedings{lecuyer2019certified,
  title={Certified robustness to adversarial examples with differential privacy},
  author={Lecuyer, Mathias and Atlidakis, Vaggelis and Geambasu, Roxana and Hsu, Daniel and Jana, Suman},
  booktitle={2019 IEEE Symposium on Security and Privacy (SP)},
  pages={656--672},
  year={2019},
  organization={IEEE}
}

@inproceedings{yang2020randomized,
  title={Randomized smoothing of all shapes and sizes},
  author={Yang, Greg and Duan, Tony and Hu, J Edward and Salman, Hadi and Razenshteyn, Ilya and Li, Jerry},
  booktitle={International Conference on Machine Learning},
  pages={10693--10705},
  year={2020},
  organization={PMLR}
}

@inproceedings{bansal2022certified,
  title={Certified neural network watermarks with randomized smoothing},
  author={Bansal, Arpit and Chiang, Ping-yeh and Curry, Michael J and Jain, Rajiv and Wigington, Curtis and Manjunatha, Varun and Dickerson, John P and Goldstein, Tom},
  booktitle={International Conference on Machine Learning},
  pages={1450--1465},
  year={2022},
  organization={PMLR}
}

@inproceedings{korzh2025certification,
  title={Certification of speaker recognition models to additive perturbations},
  author={Korzh, Dmitrii and Karimov, Elvir and Pautov, Mikhail and Rogov, Oleg Y and Oseledets, Ivan},
  booktitle={Proceedings of the AAAI Conference on Artificial Intelligence},
  volume={39},
  number={17},
  pages={17947--17956},
  year={2025}
}

@inproceedings{mao2024understanding,
  title={Understanding Certified Training with Interval Bound Propagation},
  author={Mao, Yuhao and Mueller, Mark Niklas and Fischer, Marc and Vechev, Martin},
  booktitle={The Twelfth International Conference on Learning Representations},
year={2024}
}

@inproceedings{shi2020robustness,
  title={Robustness Verification for Transformers},
  author={Shi, Zhouxing and Zhang, Huan and Chang, Kai-Wei and Huang, Minlie and Hsieh, Cho-Jui},
  booktitle={International Conference on Learning Representations},
year={2020}
}

@inproceedings{tjeng2019evaluating,
  title={Evaluating Robustness of Neural Networks with Mixed Integer Programming},
  author={Tjeng, Vincent and Xiao, Kai Y and Tedrake, Russ},
  booktitle={International Conference on Learning Representations},
year={2019}
}

@article{wang2021beta,
  title={Beta-crown: Efficient bound propagation with per-neuron split constraints for neural network robustness verification},
  author={Wang, Shiqi and Zhang, Huan and Xu, Kaidi and Lin, Xue and Jana, Suman and Hsieh, Cho-Jui and Kolter, J Zico},
  journal={Advances in Neural Information Processing Systems},
  volume={34},
  pages={29909--29921},
  year={2021}
}

@inproceedings{dong2018boosting,
  title={Boosting adversarial attacks with momentum},
  author={Dong, Yinpeng and Liao, Fangzhou and Pang, Tianyu and Su, Hang and Zhu, Jun and Hu, Xiaolin and Li, Jianguo},
  booktitle={Proceedings of the IEEE Conference on Computer Vision and Pattern Recognition},
  pages={9185--9193},
  year={2018}
}

@article{krizhevsky2009learning,
  title={Learning multiple layers of features from tiny images},
  author={Krizhevsky, Alex and Hinton, Geoffrey and others},
  year={2009},
  publisher={Toronto, ON, Canada}
}

@inproceedings{deng2009imagenet,
  title={Imagenet: A large-scale hierarchical image database},
  author={Deng, Jia and Dong, Wei and Socher, Richard and Li, Li-Jia and Li, Kai and Fei-Fei, Li},
  booktitle={2009 IEEE Conference on Computer Vision and Pattern Recognition},
  pages={248--255},
  year={2009},
  organization={Ieee}
}

@inproceedings{
dosovitskiy2021an,
title={An Image is Worth 16x16 Words: Transformers for Image Recognition at Scale},
author={Alexey Dosovitskiy and Lucas Beyer and Alexander Kolesnikov and Dirk Weissenborn and Xiaohua Zhai and Thomas Unterthiner and Mostafa Dehghani and Matthias Minderer and Georg Heigold and Sylvain Gelly and Jakob Uszkoreit and Neil Houlsby},
booktitle={International Conference on Learning Representations},
year={2021},
}

@inproceedings{he2016deep,
  title={Deep residual learning for image recognition},
  author={He, Kaiming and Zhang, Xiangyu and Ren, Shaoqing and Sun, Jian},
  booktitle={Proceedings of the IEEE Conference on Computer Vision and Pattern Recognition},
  pages={770--778},
  year={2016}
}

@inproceedings{cheng2020sign,
  title={Sign-OPT: A Query-Efficient Hard-label Adversarial Attack},
  author={Cheng, Minhao and Singh, Simranjit and Chen, Patrick H and Chen, Pin-Yu and Liu, Sijia and Hsieh, Cho-Jui},
  booktitle={International Conference on Learning Representations},
  year={2020}
}

@article{zhou2025query,
  title={Query-efficient hard-label black-box attack against vision transformers},
  author={Zhou, Chao and Shi, Xiaowen and Wang, Yuan-Gen},
  journal={Applied Soft Computing},
  volume={183},
  pages={113686},
  year={2025},
  publisher={Elsevier}
}

@article{shi2022decision,
  title={Decision-based black-box attack against vision transformers via patch-wise adversarial removal},
  author={Shi, Yucheng and Han, Yahong and Tan, Yu-an and Kuang, Xiaohui},
  journal={Advances in Neural Information Processing Systems},
  volume={35},
  pages={12921--12933},
  year={2022}
}

@inproceedings{croce2022sparse,
  title={Sparse-rs: a versatile framework for query-efficient sparse black-box adversarial attacks},
  author={Croce, Francesco and Andriushchenko, Maksym and Singh, Naman D and Flammarion, Nicolas and Hein, Matthias},
  booktitle={Proceedings of the AAAI Conference on Artificial Intelligence},
  volume={36},
  number={6},
  pages={6437--6445},
  year={2022}
}
\bibliographystyle{icml2026}

\newpage
\appendix
\onecolumn

\section{Proof of the lemma.}

\begin{proof}
    Recall that $K$ is the number of classes and let $S_j$ be the instance of the surrogate model on $j$-th alternation iteration.  Let $\{z_j\}_{j=1}^\infty$ be the sequence of adversarial examples, where $z_j$ is an adversarial example for $S_j$ and $z_0=x.$ Note that $z_j \in U_\delta(x)$ for all $j \in \mathbb{Z}_{+}.$ 
    Since for all $j \in \mathbb{Z}_+$, $S_j$ is differentiable within $U_\delta(x),$  for any two points $a,b \in U_\delta(x)$ we may write 
    \begin{equation}
        S_j(a) - S_j(b) = \nabla S_j(\tau)^\top (a-b),
    \end{equation}
    where $\tau \in U_\delta(x)$ and on the line segment between $a$ and $b$.  Specifically, for two subsequent adversarial examples, $z_j$ and $z_{j-1},$ the expression becomes 
    \begin{equation}
         S_j(z_j) - S_j(z_{j-1}) = \nabla S_j({\tau}_j)^\top (z_j-z_{j-1}),  
    \end{equation}
where $\tau_j$  is on the line segment between $z_j$ and $z_{j-1}.$ Note that $z_j$ is adversarial example for $S_j$, whereas $z_{j-1}$ was included into distillation dataset $\mathcal{D}(S)$ on previous alternation iteration.

By introducing $\rho_j = \|z_j - z_{j-1}\|_\infty,$ one can see that 
\begin{align}
    & \rho_j \le t\rho_{j-1} \le t^2 \rho_{j-2} \le \dots \le t^{j-1}\rho_1 = \nonumber \\ &t^{j-1}\|z_1 - z_0\|_\infty = t^{j-1}\|z_1 - x\|_\infty \le t^{j-1}\delta.
\end{align}
Note that when $\rho_j$ is less than $\varepsilon / \gamma,$ the norm of the difference between $S_j(z_j) - S_j(z_{j-1})$ is  bounded from above. Specifically, let 
\begin{equation}
    \phi:[0,1] \to [0,1]^K, \quad \phi(t) = S_j(z_j + t(z_{j-1}-z_j))
\end{equation}
and 
\begin{equation}
    \phi'(t) = \nabla S_j(z_j + t(z_{j-1} - z_j))(z_j - z_{j-1}).
\end{equation}
Thus,
\begin{equation}
    S_j(z_{j-1}) - S_j(z_{j}) = \phi(1) - \phi(0) = \int_{0}^{1} \phi'(t)dt = \int_0^1 \nabla S_j(z_j + t(z_{j-1} - z_j))(z_j - z_{j-1}) dt. 
\end{equation}

Now, since 

\begin{equation}
    \|\nabla S_j \|_{op,\infty} = \sup_{\|x\| \ne 0} \frac{\|\nabla S_j x\|_\infty}{\|x\|_\infty} \implies \|\nabla S_j x\|_\infty \le \|\nabla S_j \|_{op,\infty}  \|x\|_\infty 
\end{equation}
we write 

\begin{align}
    \|S_j(z_{j-1}) - S_j(z_{j})\|_\infty     =\left\|\int_0^1 \nabla S_j(z_j + t(z_{j-1} - z_j))(z_j - z_{j-1}) dt \right\|_\infty \le \nonumber \\
     \le \int_0^1\left\| \nabla S_j(z_j + t(z_{j-1} - z_j))(z_j - z_{j-1})  \right\|_\infty dt \le \nonumber \\
    \le \int_0^1 \|\nabla S_j(z_j + t(z_{j-1} - z_j)) \|_{op, \infty} \|z_{j} - z_{j-1}\|_\infty dt  \le \nonumber \\
    \le \|z_j - z_{j-1}\|_\infty \int_0^1 \sup \big(\|\nabla S_j(z_j + t(z_{j-1} - z_j))\|_{op,\infty}\big)  dt \le \nonumber \\
    \le \gamma \|z_j - z_{j-1}\|_\infty < \gamma \varepsilon/\gamma = \varepsilon
\end{align}

That yields 
\begin{equation}
\label{eq:same_indices}
\arg\max_{i \in [1,\dots,K]}S_j(z_j)_i = \arg\max_{i \in [1,\dots,K]}S_j(z_{j-1})_i
\end{equation}
according to Eq. \ref{eq:half_difference}. Recall that  $z_{j-1}$ is included into distillation dataset  $\mathcal{D}(S)$ on iteration $j-1,$ so  
\begin{equation}
 \arg\max_{i \in [1,\dots,K]}S_j(z_{j-1})_i = T(z_{j-1})
\end{equation} according to Eq. \ref{eq:student_gap}.
Now observe that Algorithm \ref{alg:adv_whitebox} yielded an adversarial example, namely, $z_{j}$ for the model $S$ on iteration $j$. At the same time, the prediction of $S$ for $z_{j-1}$ and for $z_j$ are the same (see Eq. \ref{eq:same_indices}). That means that the predicted class label for $z_j$, say, $c_A = \arg\max_{i \in [1,\dots,K]}S_j(z_j)_i $ was assigned by $T$ to the previous sample, $z_{j-1}$:
\begin{equation}
    c_A = \arg\max_{i \in [1,\dots,K]}S_j(z_j)_i = T(z_{j-1}).
\end{equation}
Finally, for the values of $j$ satisfying
\begin{equation}
    t^{j-1}\delta \le \varepsilon / \gamma \ \xleftrightarrow{t \in (0,1)} \ (j-1)\ln t \le \ln \varepsilon - \ln \delta - \ln \gamma, 
\end{equation}
the value $\rho_j$ is less than $\varepsilon /\gamma,$ what finalizes the proof. 
    
\end{proof}

\section{Hyperparameters of Baseline Methods}
In this section, we present the values of hyperparameters used in the methods with which we compare our approach. In all experiments, the query budget was set to 500. The only exception is the ResNet50 model on the CIFAR-10 dataset, where the query budget was limited to 300.

\begin{table}[htb!]
  \caption{Hyperparameters of baseline methods}
  \label{tab:hyperparams}
  \begin{center}
      \begin{sc}
        \begin{tabular}{@{}ll@{}}
        \toprule
        \textbf{Method} & \textbf{Hyperparameters} \\ \midrule
        HopSkipJump $l_2$ / $l_\infty$& 
        \begin{tabular}[c]{@{}l@{}}
            num\_samples\_for\_init = $100$ \\
            num\_samples\_for\_grad\_est = $100$ \\
            max\_iter=$100$
            
        \end{tabular} \\ \midrule
        
        SignOPT &
        \begin{tabular}[c]{@{}l@{}}
            num\_samples\_for\_init = $100$ \\
            num\_samples\_for\_grad\_est = $100$ \\
            max\_iter = $100$ \\
        \end{tabular} \\ \midrule
        
        GeoDA &
        \begin{tabular}[c]{@{}l@{}}
            sub\_dim = 150 \\
            db\_search\_steps = 200 \\
            bin\_search\_tol = 0.0001 \\
            $\lambda$ = 0.6 \\
            $\sigma$ = 0.0002 \\
        \end{tabular} \\ \midrule
        
        PAR &
        \begin{tabular}[c]{@{}l@{}}
            initial\_patch\_size = $56$ \\
            min\_patch\_size = $7$ \\
        \end{tabular} \\ \midrule

        AdvViT &
        \begin{tabular}[c]{@{}l@{}}
            num\_samples\_for\_init = $100$ \\
            init\_attempts\_extra = $100$ \\
            patch\_num = $14$ \\
            dim\_size = $4$ \\
            $\alpha$ = $4.0$ \\
            K\_sign = $100$ \\
        \end{tabular} \\ \midrule  
        
        SquareAttack $l_\infty$ &
        \begin{tabular}[c]{@{}l@{}}
            $\varepsilon = \frac{32}{255}$ \\
            p\_init = $0.05$ \\
        \end{tabular} \\ \midrule
        
        SparseRS &
        \begin{tabular}[c]{@{}l@{}}
            norm = $\ell_0$ \\
            $\varepsilon$ = $2000$ \\
            p\_init = $0.3$ \\
        \end{tabular} \\ 
        

        
        \bottomrule
        \end{tabular}
      \end{sc}
  \end{center}
\end{table}



\end{document}